\newtheorem{theorem}{Theorem}[section]
\theoremstyle{definition}
\newtheorem{definition}[theorem]{Definition}
\newtheorem{claim}[theorem]{Claim}
\ifcvprfinal\pagestyle{empty}\fi
\begin{document}

\newcommand{\bigo}{\mathcal{O}}
\newcommand{\bigom}{\mathcal{O}^{\mathrm{MEM}}}
\newcommand{\bigof}{\mathcal{O}^{\mathrm{MAC}}}
\newcommand{\mrk}[1]{%
{\color{red}{#1}}
}

%%%%%%%%% TITLE
\title{WaveletNet: Logarithmic Scale Efficient Convolutional Neural Networks for Edge Devices}

\author{
% First Author\\
% Institution1\\
% Institution1 address\\
% {\tt\small firstauthor@i1.org}
% For a paper whose authors are all at the same institution,
% omit the following lines up until the closing ``}''.
% Additional authors and addresses can be added with ``\and'',
% just like the second author.
% To save space, use either the email address or home page, not both
Li Jing${^*}$, Rumen Dangovski${^*}$, Marin Solja\v{c}i\'{c} \\
  Massachusetts Institute of Technology\\
Cambridge, MA \\
\texttt{\{ljing, rumenrd, soljacic\}@mit.edu} \\
${^*}$ indicates equal contribution
}

\maketitle
%\thispagestyle{empty}

%%%%%%%%% ABSTRACT
\begin{abstract}
   We present a logarithmic-scale efficient convolutional neural network architecture for edge devices, named WaveletNet. Our model is based on the well-known depthwise convolution, and on two new layers, which we introduce in this work: a wavelet convolution and a depthwise fast wavelet transform. By breaking the symmetry in channel dimensions and applying a fast algorithm, WaveletNet shrinks the complexity of convolutional blocks by an $\mathcal{O}(\log{D}/D)$ factor, where $D$ is the number of channels.  Experiments on CIFAR-10 and ImageNet classification show superior and comparable performances of WaveletNet compared to state-of-the-art models such as MobileNetV2.
\end{abstract}

%%%%%%%%% BODY TEXT
\section{Introduction}
Convolutional neural networks (CNNs) have dominated computer vision since deep CNNs, such as AlexNet~\cite{krizhevsky2012imagenet}, began winning the ImageNet Challenge: ILSVRC 2012~\cite{ILSVRC15}. To achieve high accuracy, researchers have been increasing the depth and complexity of CNNs to extreme boundaries of hardware limits (\cite{43022, Simonyan14c, He2016DeepRL, Szegedy2016RethinkingTI, Huang2017DenselyCC, Szegedy2017Inceptionv4IA}). As a result, more and more computational power is required to run such complicated networks. Because of their limited computational power, as well as memory space, many real world applications, such as robotics, self-driving cars, augmented reality and smartphones are unable to afford such CNNs. Hence, the necessity and efficiency of so many layers and computationally intensive architectures is being challenged. 

Recently, researchers have been focusing on developing efficient, mobile-friendly neural networks such as MobileNet~\cite{Howard2017MobileNetsEC}, ShuffleNet~\cite{Zhang2017ShuffleNetAE} and their improved versions (V2s)~\cite{Sandler2018MobileNetV2IR, Ma2018ShuffleNetVP}. These models significantly shrink the conventional CNNs without decreasing accuracy noticeably. The key to such reductions is lowering the number of operation between channel dimensions through \textit{group convolutions} (GConv) \cite{krizhevsky2012imagenet} and \textit{depthwise convolutions} (DWConv) \cite{Chollet2017XceptionDL}. In terms of complexity on the channel dimension, these convolutions utilize two extremes, $\bigo(D^2/g)$ and $\bigo(D)$ scales respectively, where $g$ is the number of groups in a GConv. 

\begin{figure}[!t]
	\centering
  \includegraphics[width=0.9\linewidth]{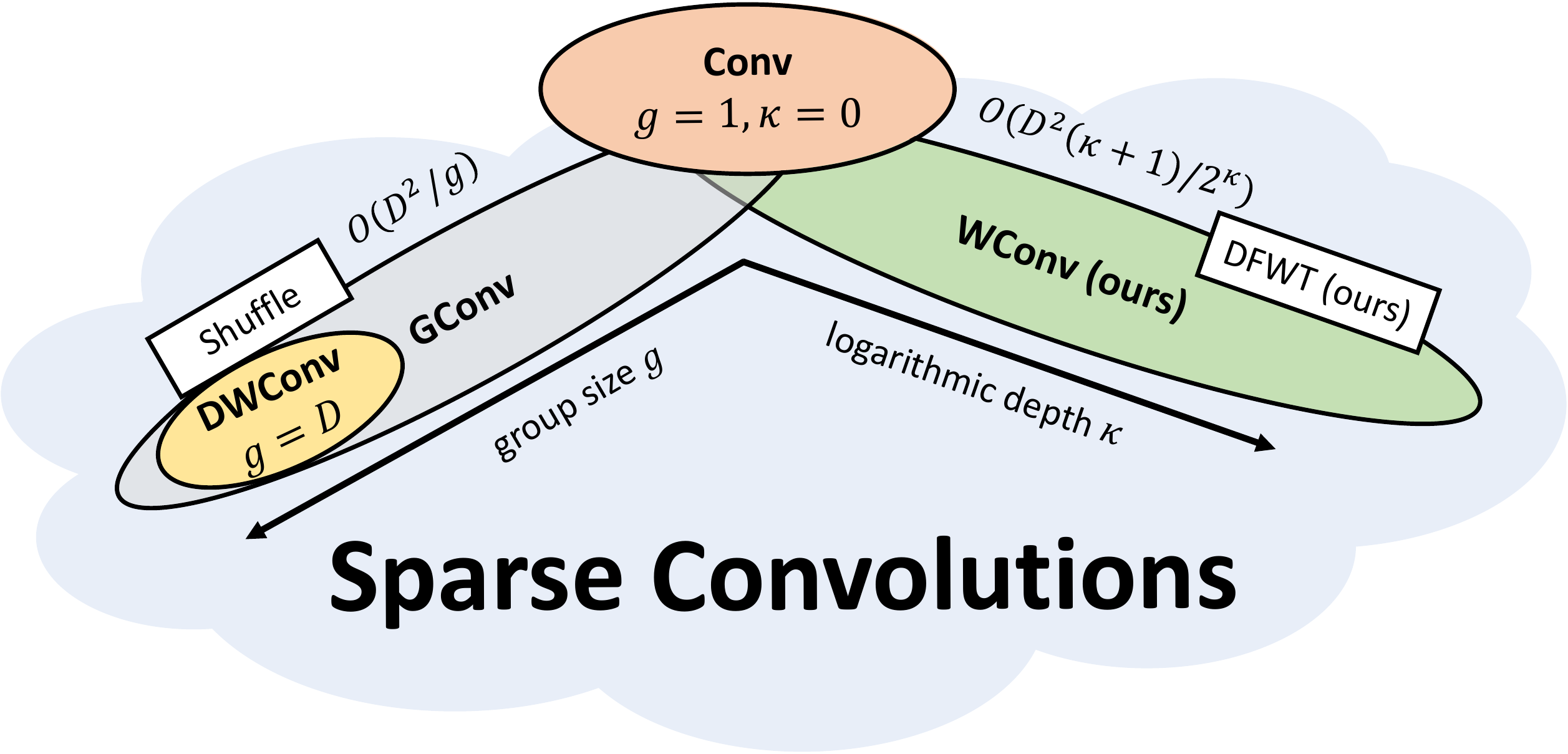}
  \caption{Space of sparse convolutions. Our contribution is a new direction of efficient convolutions (in green). Input and output channels are both assumed to equal $D$. Best viewed in color. \label{fig:intro}}
  \vskip -0.2in
\end{figure}

Automatic machine learning (AutoML) uses these human designed architectures as an inductive bias for automatic architecture search, which provides state-of-the-art efficiency~\cite{Zoph2017LearningTA, Liu2018PNAS, Pham2018ENAS, Tan2018MNAS}. Inspired by models obtained from AutoML we identify that \textit{breaking the symmetry} is important for efficient models. 
%For example, MNasNet~\cite{Tan2018MNAS} interleaves 3x3 and 5x5 convolutions asymmetrically.
Intuitively, keeping a symmetric structure for each building block of the CNN could generate redundant features and thus the limited size of edge models might lead to low accuracy. We propose to break the symmetry in the \textit{channel} dimensions instead, thereby introducing channel imbalance for log-scale computations. To our knowledge, channel imbalance has not been previously explored by AutoML nor by human designs, thus we choose this new architecture space to aim at efficiency.

Our contributions' fit in research on sparse convolutions is shown in Figure~\ref{fig:intro}. We find that a key for sparsity/accuracy trade-off is to maintain connectivity between channel dimensions within residual blocks; in combination with channel imbalance this provides efficient flow of information. Therefore, we propose a new convolutional layer, named \textit{wavelet convolution} (WConv) and its conjugate layer to keep connectivity: \textit{depthwise fast wavelet transform} layer (DFWT). WConv breaks the channel symmetry and reduces complexity to $\log D$ while DFWT takes negligible operations. DFWT itself is derived from an optimization through a formalism on connectivity that we develop to utilize the space of imbalanced channel dimensions.  Hence, our building block for CNNs yields full connectivity and attains the efficiency/accuracy trade-off of MobileNetV2 but requires log-scale complexity: an unmatched feat by the state-of-the-art.

Using our building blocks, WConv and DFWT, we construct a new efficient CNN called \textit{WaveletNet}. We compare WaveletNet to other extremely small models such as MobileNetV1/2 and ShuffleNet, on CIFAR-10 and ImageNet. We find that WaveletNet outperforms state-of-the-art models on CIFAR-10. Furthermore, we conduct an ablation study to signify the importance of the DFWT layer. We find that DFWT is important for boosting the accuracy of WConv, just like the Shuffle layer improves grouped convolution in ShuffleNet~\cite{Zhang2017ShuffleNetAE}.  Our experiments on ImageNet achieve 30.6\% error with only 216M MACs (multiply-accumulate operations) and 2.72M parameters thereby showing that the accuracy of WaveletNet is comparable to that of MobileNetV2.

Our contributions are as follows: we (\emph{i}) inspired by AutoML first explore breaking the symmetry in channel dimensions, which may open a new path for human designed architecture search; (\emph{ii}) provide connectivity guidelines for development of efficient CNNs; (\emph{iii}) develop two new convolutional layers: WConv and DFWT that serve as building blocks for any CNN; (\emph{iv}) provide a state-of-the-art efficient CNN for edge devices.

\section{Related work} \label{sec:rel-work}

\subsection{Efficient models approach}

The computer vision community has been well-focused on building efficient convolutional neural networks that can match modern day hardware. For example, ever since AlexNet \cite{krizhevsky2012imagenet}, group convolution has been widely used to decrease the model complexity, thereby addressing limited GPU memory and achieving parallel GPU processing. Methods, such as factorization \cite{Wang2017FactorizedCN} and depthwise separable convolutions \cite{Chollet2017XceptionDL}, have demonstrated successful, lighter, structures that have significantly smaller numbers of parameters, compared to conventional 2D convolutional layers (Conv). Extremely small models, such as MobileNet \cite{Howard2017MobileNetsEC} and ShuffleNet \cite{Zhang2017ShuffleNetAE}, provide solutions that can achieve acceptable accuracy on ImageNet within small amount of computation budget. Other notable efficient models include \cite{2016Squeeze, 2016MergeAndRun, Zoph2017LearningTA, Huang2017DenselyCC, 2017IntGroup}. 
Our work falls into this category.

\subsection{Automatic architecture search}
Outside the scope of these human designed architectures, recent work utilizes reinforcement learning and model searching to construct highly efficient networks through the NasNet framework~\cite{Zoph2017LearningTA}. NASNet~\cite{Zoph2017LearningTA} first demonstrated that AutoML is able to design more efficient architectures than humans. A shortage of such methods is that they require huge amount of GPU hours which is not affordable by the average researcher. Therefore, PNASNet~\cite{Liu2017ProgressiveNA} and ENASNet~\cite{Pham2018EfficientNA} aimed at improving the efficiency of the architecture search to narrow down to hundreds of GPU hours. Moreover, a promising trend is gaining insights from these meta-learning result and extracting guidelines for designing efficient models~\cite{Tan2018MNAS}.

\section{Preliminaries, discussion and intuition}

\subsection{Full connectivity in a residual block}
Since AlexNet~\cite{krizhevsky2012imagenet} grouped convolution has been widely used in convolutional neural network architectures to bring sparsity and save memory requirement. Grouped convolution divides the tensor along the channel dimensions equally into several isolated parts and applies convolutions on each part. As a result, both number of parameters and operation counts shrink by a factor of $g$ where $g$ is the number of groups. However, grouped convolution introduces a trade--off: the sparsity caused by grouped convolution leads to decrease in accuracy. To combat this the dimensions need to be tuned carefully to make the sparsity/accuracy trade-off worth because, alternatively, one can always just reduce the number of channels. Several works find that special designs of grouped convolution or extra depthwise operations can compensate the trade--off and provide profitable sparsity. Examples of such models include ShuffleNet~\cite{Zhang2017ShuffleNetAE}, IGCV~\cite{IGCV1} and their variations~\cite{IGCV2, IGCV3, Ma2018ShuffleNetVP}.

We observe that connectivity is key for maintaining accuracy when using grouped convolutions. Within a residual block we say that an input channel is connected to an output channel if there is a computational path between the two, i.e. if the output channel depends on the information in the input channel. By \textit{full} connectivity we mean that every input channel is connected to every output channel. In the Supplementary Material we build a connectivity formalism to study this.

Our intuition is as follows: a convolutional block generates a higher level representation of its input tensor. Each input feature is encoded as a superposition of the input channels. An output feature is formed by combining input features and is itself encoded as a superposition of the output channels. Hence, it is important to connect each input channel to each output channel. Figure~\ref{fig:concept} (\emph{a}) sketches full connectivity where the new representation depends on all input features. In contrast, Figure~\ref{fig:concept} (\emph{b}) shows a potential failure of grouped convolution where limited connectivity is present.

\begin{figure}[t!]
	\centering
  \includegraphics[width=0.9\linewidth]{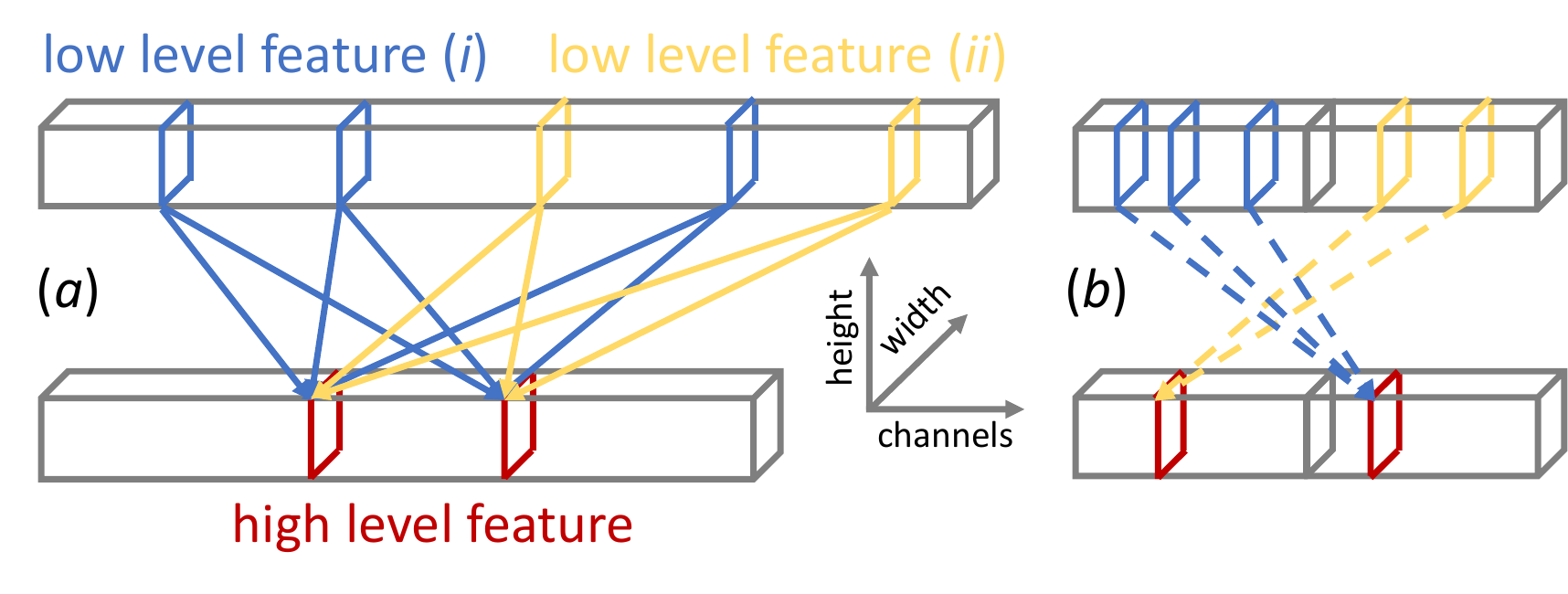}
  \caption{Full vs. limited connectivity. (\emph{a}) The high level feature is a superposition of channels (red) that are all connected with the low level features (\emph{i,ii}). (\emph{b}) grouped convolution (with two independent groups) limits connectivity, so the high level feature cannot connect with all channels in the blue and yellow superpositions (features (\emph{i,ii})).  Best viewed in color. \label{fig:concept}}
\end{figure}

Therefore, to keep full connectivity within each block, an extra depthwise layer is necessary. For example, the Shuffle layer provides connectivity to grouped convolutions, which boosted ShuffleNet's performance to achieve state-of-the-art efficiency/accuracy trade-off~\cite{Zhang2017ShuffleNetAE}.

\subsection{Asymmetry in grouped convolution}
AutoML provides higher accuracy in most classification tasks such as CIFAR-10 and ImageNet. Searching for an optimal model often comes at the expense of complicated meta-information describing the model. Thus, it is difficult for human researchers to understand the fundamental advantages of the architectures. Some attempts at reading AutoML's designs are being made, e.g. explaining the presence of more and more 5x5 kernels in MNasNet~\cite{Tan2018MNAS}, but it is still not very clear how such insights could benefit human design of efficient architectures. 

We believe such complications arise because AutoML is performed in a big search space, the space of \textit{asymmetric} models, where humans are not good at producing efficient designs. Indeed, human developed models are usually extremely symmetric in some degree. For example VGG, ResNet, MobileNet, ShuffleNet and others have strong symmetry in channel dimensions. 

There are several attempts to break the symmetry in convolutional neural networks. Inception models break the symmetry in spatial dimensions by applying more diversified window sized convolutions and provided state-of-the-art performance compared to prior models. Recently, SENet achieved the highest record on ImageNet because it provided a bigger design space outside the symmetric scope of channel dimensions~\cite{Hu2017SqueezeandExcitationN}.

Here, we propose breaking the symmetry in channel dimensions. Even this narrowed search space is too big to be explored, thus we design certain patterns to follow a specific functional path in this space. We model our design in a recursive way: a grouped convolution with recursively decreased depth.

\subsection{Advantage of fast algorithms}
Human designed architectures always choose a certain scope and hence restrict the search for models in a subspace of the overall functional space. Thus specially designed layers can be used in combination with efficient convolutions so that the reduced number of parameters does not decrease accuracy significantly. For example, the non-trainable Shuffle layer can be usefully attached to GConv (i.e. Shuffle acts as conjugate to GConv). Such conjugate layers can boost the performance of efficient CNNs significantly.   

It is important to be able to perform conjugate layers efficiently. With its simple description the Shuffle layer can be implemented very fast. Similarly, researches have tried use Fast Fourer Transforms in spatial dimensions~\cite{Rippel2015SpectralRF}.  

Here, we focus on applying fast algorithms in channel dimensions. Similar to the Shuffle layer, we propose a depthwise fast wavelet transformation (DFWT) which provides strong connectivity with minimal operations. DFWT establishes full connectivity in each block and significantly reduces the requirement for trainable parameters and overall complexity. As a result, this fast algorithm provides same effect but reduces the number of operations significantly.

\begin{figure}[!t]
	\centering
  \includegraphics[width=0.9\linewidth]{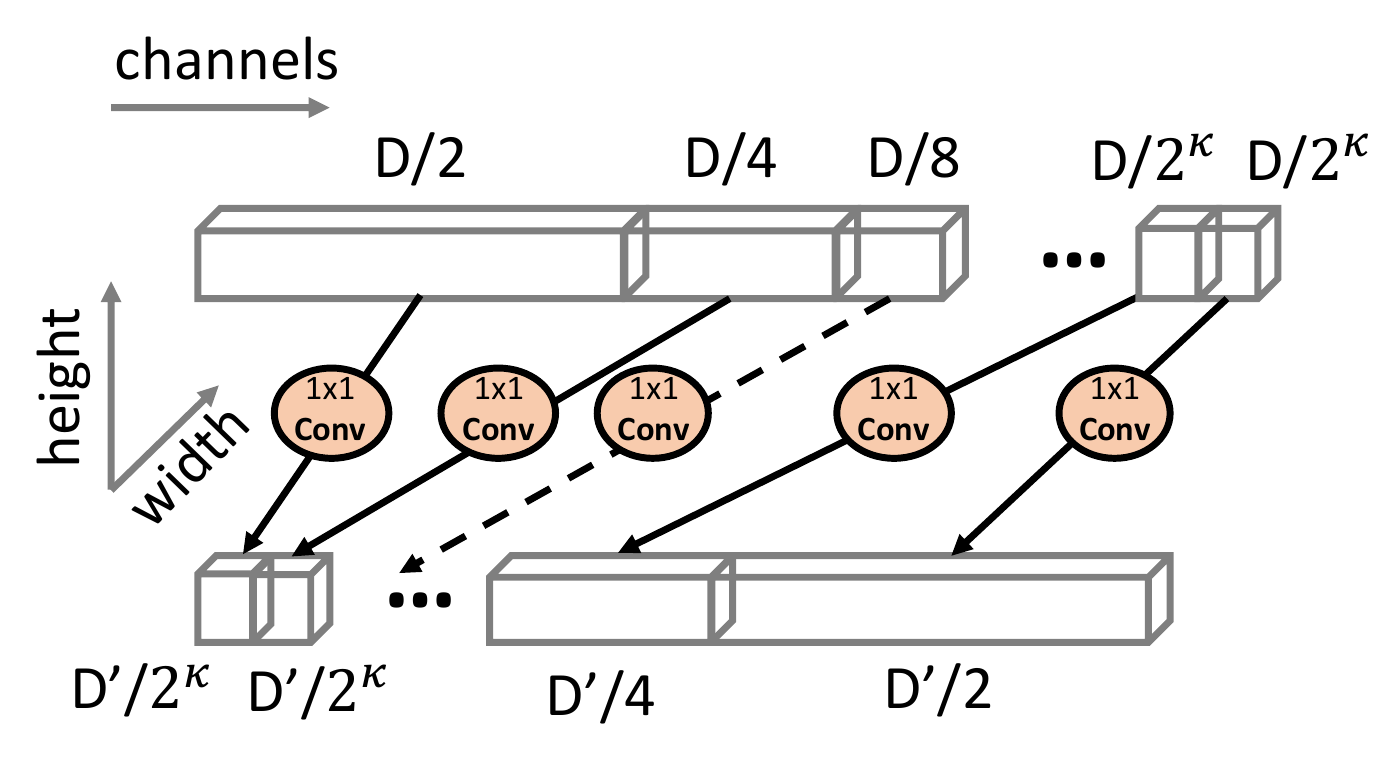}
  \caption{Architecture for the Wavelet Convolution WConv($\kappa$) with $D$ input, $D'$ output channels and depth $\kappa$. We always assume that $D$ and $D'$ are divisible by $2^\kappa$. The output piece for the dashed line is not shown. Note that $D$ may not equal $D'$. \label{fig:wconv}}
\end{figure}

\section{The WaveletNet Architecture}

\subsection{Wavelet Convolution (WConv)}
WConv is an asymmetric grouped convolution. Instead of dividing the input channel dimension $D$ into groups of equal size we propose an asymmetric partition parameterized by a logarithmic depth $\kappa$ as follows
\begin{equation} \label{eq:partition_in}
\{D/2,D/4,D/8,\dots,D/2^{\kappa - 1}, D/2^\kappa, D/2^\kappa \}
\end{equation}
We assume throughout the paper that $2^\kappa$ divides all channel dimensions, so that the sum of the integer pieces in Equation~\eqref{eq:partition_in} is $D/2^{\kappa}(1+1+2+4+\dots+2^{\kappa-1}) = D$, i.e. it adds up to $D$. To leverage efficiency, we match long input channels with short input channels by partitioning the output channel dimension $D'$ in a reversed way as follows 
\begin{equation} \label{eq:partition_out}
\{D'/2^\kappa,D'/2^\kappa,\dots,\dots, D'/2^4, D'/2 \}.
\end{equation}
We align partitions \eqref{eq:partition_in} and \eqref{eq:partition_out} and perform standard KxK convolutions (KxKConv) between corresponding pieces. Figure~\ref{fig:wconv} sketches WConv and Algorithm~\ref{alg:WConv} describes the formal procedure of implementing this layer. We call this procedure wavelet convolution because our partition resembles constructions with dyadic intervals, which have applications in wavelet analysis.

\begin{algorithm}[H]
  \caption{Wavelet Convolution (WConv)}
  \label{alg:WConv}
\begin{algorithmic}
  \STATE {\bfseries Input:} $\mathbf{x} \text{ (size } N\times N \times D), \kappa, D'$ //input (filer size x filter size x input channels), depth, output channels
  \STATE {\bfseries Output:} $\mathbf{y} \text{ (size } N\times N \times D')$
  \STATE $s\leftarrow 0$; $e\leftarrow D/2$ //start and end indices
  \STATE $\mathbf{y}_0\leftarrow$Conv($\mathbf{x}$[:,:,$s$:$e$],stride=1,out\_ch.=$D'/2^\kappa$)
  \FOR{$i=1$ {\bfseries to} $\kappa - 1$}
  \STATE $s\leftarrow e$
  \STATE $e\leftarrow e + D/2^{i+1}$
  \STATE $\mathbf{y}_i\leftarrow$Conv($\mathbf{x}$[:,:,$s$:$e$],stride=1,out\_ch.=$D'/2^{\kappa-i+1}$)
  \ENDFOR
  \STATE $\mathbf{y}_k\leftarrow$Conv($\mathbf{x}$[:,:,$D-D/2^\kappa$:$D$],stride=1,out\_ch.=$D'/2$)
  \STATE $\mathbf{y} \leftarrow \text{Concat}([\mathbf{y}_{0},\dots,\mathbf{y}_{k}])$ //on channel dimension
\end{algorithmic}
\end{algorithm}

\subsection{Depthwise Fast Wavelet Transform (DFWT)}
Similar to the Shuffle layer for grouped convolutions, a specific \textit{non-trainable} layer is the key for efficient utilization of WConv. We propose a new layer: the \textit{Depthwise Fast Wavelet Transform} (DFWT). Figure \ref{fig:dfwt} demonstrates our construction. 

DFWT effectively broadcasts a Haar Matrix on the channel dimension. This can be viewed as a convolution with no additional parameters to the model that only manipulates the channel dimension. Moreover, since Haar matrices are sparse, instead of an $\bigo(D^2)$ vector-matrix multiplication we can implement an extremely efficient $\bigo(D)$ multiplication through a relabeling tweak to the Fast (Haar) Wavelet Transform~\cite{Keiser1998FastHaar, WaveletBook2012}. Algorithm \ref{alg:DFWT} is our own simple and efficient implementation of what-we-call the \textit{modified} Fast Wavelet Transform, adapted to the DFWT layer. 

\begin{algorithm}
  \caption{Depthwise Fast Wavelet Transform (DFWT)\label{alg:DFWT}}
\begin{algorithmic}
  \STATE {\bfseries Input:} $\mathbf{x} \text{ (size } N\times N\times N)$, $\kappa$ //input (filer size x filter size x input channels), depth
  \STATE {\bfseries Output:} $\mathbf{y} \text{ (size } N\times N\times D)$
  \STATE $\mathbf{T}\leftarrow \mathbf{x}$ 
  \FOR{$i=1$ {\bfseries to} $\kappa$}
  \STATE $ n\leftarrow$ channel dimension of $\mathbf{T}$
  \STATE $\mathbf{T} \leftarrow \text{reshape}(\mathbf{T}, [n/2, 2]) \text{ //on channel dimension}$
  \STATE $\mathbf{y}_{i}\leftarrow\mathbf{T}[1] - \mathbf{T}[0]$ //on new channel dimension $n/2$
   
  \STATE $\mathbf{T} \leftarrow \mathbf{T}[1] + \mathbf{T}[0]$ //on new channel dimension $n/2$
  \ENDFOR
\STATE $\mathbf{y}_{\kappa} \leftarrow \mathbf{T}$ //remaining tensor from the end of the loop
\STATE $\mathbf{y} \leftarrow 
\text{Concat}([\mathbf{y}_1, \dots, \mathbf{y}_\kappa])$ //on channel dimension
\end{algorithmic}
\end{algorithm}

\begin{figure}[!t]
	\centering
  \includegraphics[width=0.9\linewidth]{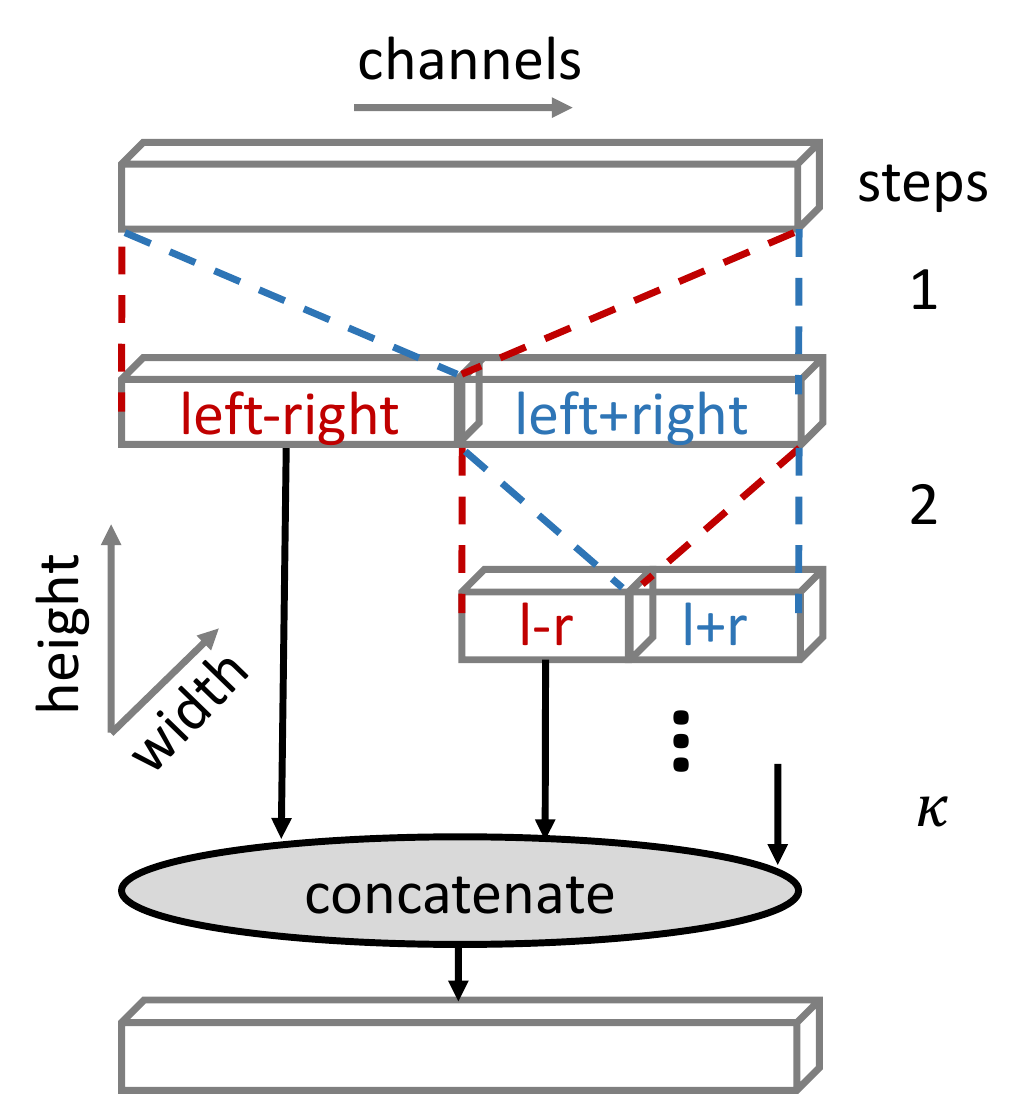}
  \caption{Architecture for the Depthwise Fast Wavelet Transform DFWT($\kappa$) with depth $\kappa$. For $\kappa$ steps the tensor features are split in half along the channels dimension. The difference of the two parts is concatenated at the output, while the sum is fed recursively to the next step. \label{fig:dfwt}}
\end{figure}

% \end{comment}
In the Supplementary Material, we use our formalism to prove that DFWT ensures full connectivity of the WConv layer. We demonstrate the importance of this property via an ablation study (see Table \ref{tabl:abl-study} for example). Assuming full connectivity we can derive the Haar matrix as a broadcasting element in DFWT. We call the new layer depthwise fast wavelet transform because of the applications of the Haar transform in wavelet analysis.

\subsection{The WaveletNet unit}
We propose \textit{WaveletNet}, which is a CNN, composed of unit structures of WConv, DFWT and DWConv.

The \textit{bottleneck block} has been proven useful ever since ResNet won ILSVRC in 2016~\cite{He2016DeepRL}. We develop our \textit{WaveletNet unit} based on this bottleneck block. The unit breakdown is listed in Figure~\ref{fig:wnunit}. We use a reverse residual block with expanstion ratio $t$ in a similar fashion to~\cite{Sandler2018MobileNetV2IR}. The three logarithmic depths through which we parameterize the first WConv, DFWT and the last WConv are respectively $\kappa_1$, $\kappa_2$ and $\kappa_3$. We conduct a grid search over small values for the logarithmic depths on CIFAR-10 and identify that the optimal parameters are $\kappa_1=0$, $\kappa_2=3$ and $\kappa_3=3$. We keep these values fixed throughout the paper.

We now study the tunability of our WaveletNet unit. From Figure~\ref{fig:intro} we note that when $\kappa_1=0$ then 1x1WConv($\kappa_1$) coincides with 1x1Conv. Analogously, setting $\kappa_3=0$ also gives 1x1Conv for 1x1WConv($\kappa_3$). This is in line with the fact that GConv and WConv converge to a standard convolution when we set the group size to be one and the logarithmic depth to be zero. Moreover, if we set $\kappa_2=0$ then the for loop in Algorithm~\ref{alg:DFWT} will not be executed, thus DFWT($\kappa_2$) becomes the identity function. Hence, combining all considerations and setting $\kappa_1,\kappa_2,\kappa_3$ to zeros we recover the block unit for MobileNetV2. Therefore MobileNetV2 appears as a corner case for our architecture. 

Most notably, however, the WaveletNet unit is flexible and more efficient outside of the above-mentioned corner case. In our model we aim to set the depth $\kappa$ large in order to approach log-scale complexity. 

\begin{figure}[!t]
	\centering
  \includegraphics[width=0.9\linewidth]{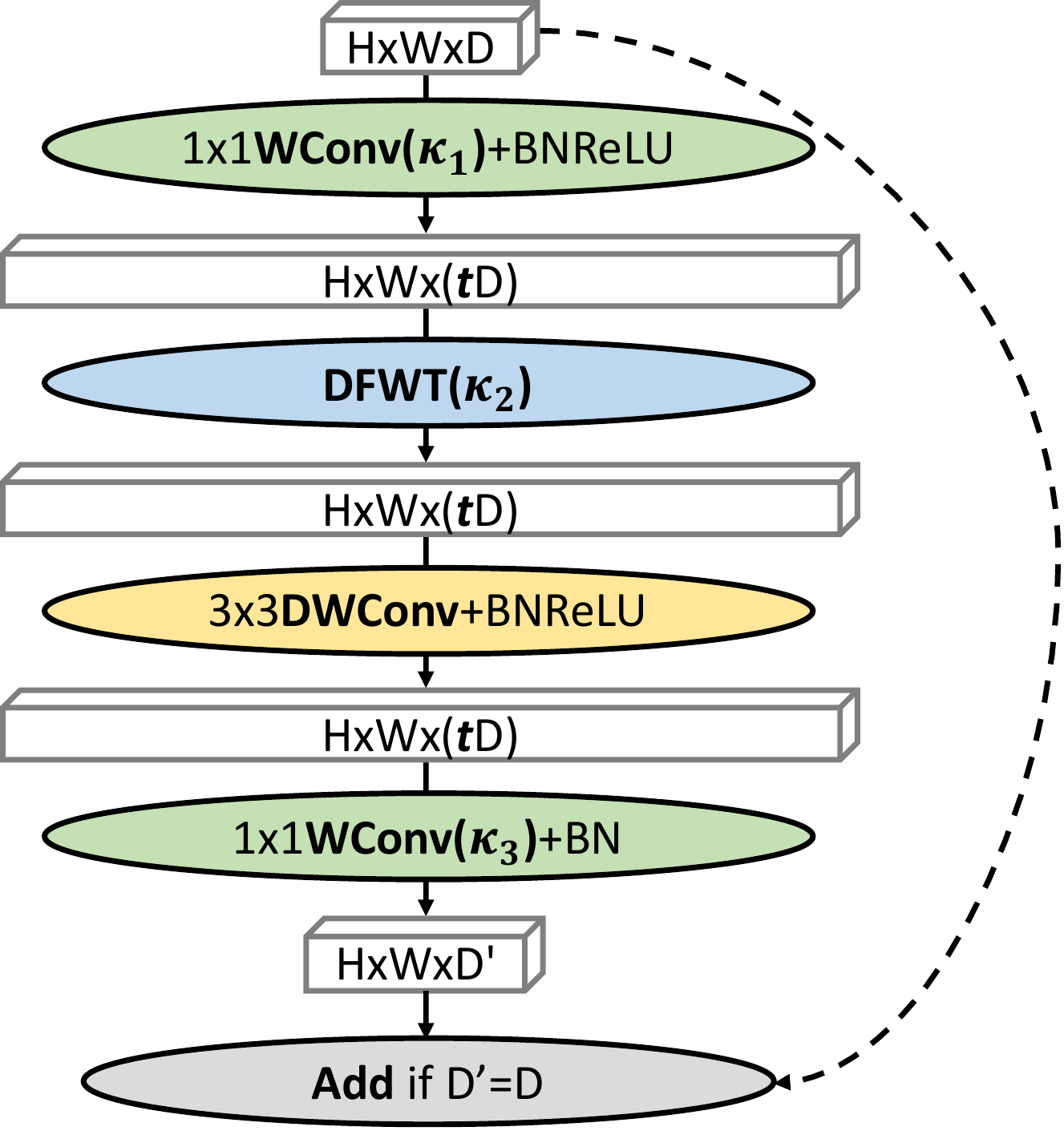}
  \caption{Architecture for the WaveletNet Unit with an expansion ratio $t$, depths $\kappa_1$, $\kappa_2$ and $\kappa_3$, input channels $D$ and output channels $D'$. \label{fig:wnunit}}
\end{figure}

\subsection{Complexity study}
Here, we analyze the model complexity. Assume the convolutional block has input dimension $N \times N \times D$ and output dimension $N \times N \times D'$. The operation and number of parameters for each layer is listed in the following table.
\begin{table}[h!]
    \small
    \centering
    \begin{tabular}{l|l|l}
        \hline
        conv layer & \#MAC & \#params \\
        \hline
        3x3DWConv  & $9N^2D'$  & $9D$   \\
        1x1Conv          & $N^2DD'$  & $DD'$   \\
        1x1GConv (group $g$)      & $N^2DD'/g$  & $DD'/g$   \\
        \hline
        1x1WConv  & $N^2DD'(\kappa+1)/2^{\kappa}$  & $DD'(\kappa+1)/2^{\kappa}$   \\
        1x1WConv (max $\kappa$)    & $N^2D\log{D'}$  & $D\log{D'}$   \\
        \hline
    \end{tabular}
    \caption{Complexity of WConv vs. other convolutional layers}
    \label{tab:my_label}
\end{table}

Usually, grouped convolutions use group size $g=3$~\cite{Zhang2017ShuffleNetAE}. Therefore, WConv will have less MAC and number of parameters if $\kappa>2$. This is mostly in line with our grid search because we find that the best trade-off for WaveletNet is achieved for $\kappa_2=\kappa_3=3$. The only deviation is $\kappa_1=0$, which leads to 1x1Conv as the first layer of the convolutional block. Empirically, we find keep $\kappa_1=0$ is responsible for maintaining good accuracy despite not approaching a log-scale complexity.

%In most building blocks we pick the largest $\kappa$ possible and achieve the optimal logarithmic efficiency. 
% Here, we also compare the complexity of different building blocks. We assume the expansion ratio is $t$. For expansion, $t>1$ and for bottleneck $t<1$. We ignore operations and parameters in nonlinearity and batch normalizations.
% \begin{table}[h!]
%     \centering
%     \begin{tabular}{l|c|c}
%         \hline
%         layer & #MAC & \#params \\
%         \hline
%         basic residual block  & $9N^2d'$  & $9d$   \\
%         MobileNetV1            & $N^2dd'$  & $dd'$   \\
%         MobileNetV2      & $N^2dd'/g$  & $dd'/g$   \\
%         ShuffleNet  & $N^2dd'(\kappa+1)/2^{\kappa}$  & $dd'(\kappa+1)/2^{\kappa}$   \\
%         WaveletNet    & $N^2d\log{d'}$  & $d\log{d'}$   \\
%         \hline
%     \end{tabular}
%     \caption{Complexity of WaveletNet Unit vs other building blocks}
%     \label{tab:my_label}
% \end{table}
\subsection{A limitation}
The limitation of the current WaveletNet block unit is that we still effectively use a 1x1Conv layer as the first layer (since $\kappa_1=0$). As a result, we only replace one convolutional layer by a truly logarithmic WConv (e.g. $\kappa_3=3$). Therefore, the overall complexity is bounded by the 1x1Conv layer, which does not lead to a dominating logarithmic scale. Nevertheless, our results in the following section show that only by replacing one of the standard convolutional layers is enough to achieve a worthy efficiency/accuracy trade-off. This encourages us to explore equipping \textit{all} given layers with log-scale complexity for future work. Namely, we believe that further tuning of $\kappa_1$, $\kappa_2$ and $\kappa_3$ and exploration of hyperparameters could imporve our search for efficient log-scale models.

\section{Experiments} \label{sec:exp}
In this section, we present experimental results on Image Classification for different-sized WaveletNet models along with other state-of-the-art efficient CNNs. We use the datasets CIFAR-10 \cite{CIFAR10} and ImageNet  (ILSVRC 2012 \cite{ILSVRC15}) for evaluation. We also perform ablation study to prove the necessity of Depthwise Fast Wavelet Transform in our model.
% The implementation of  model and experiment can be found in the following link \url{https://github.com/jingli9111/WaveletNet}.

\subsection{CIFAR-10}
We first evaluate our model on the small scale CIFAR-10 dataset. CIFAR-10 contains 50000 training samples and 10000 test samples. We strictly follow the same setting from \cite{He2016DeepRL} to have a fair comparison (i.e. we consider ResNet-$M$). We use Momentum Optimizer with a momentum decay rate $0.9$. The learning rate starts from 0.1 and drops to 0.01 at 82 epochs. The weight decay is set to 0.0002. Batch size is 128. We still use the same architecture from \cite{He2016DeepRL} for our network, i.e. the model contains 3 stages of $M$ identical building blocks. Hence, the total number of layers is $9M + 2$ for our architecture. We use standard data augmentation to have a fair comparison. Here, we choose $M=24$ for WaveletNet.

The test accuracy of our architectures is reported in Table \ref{tbl:cifar10} along with other state-of-the-art efficient models. Our model outperforms MobileNetV2, IGCV2~\cite{IGCV2} and IGCV3~\cite{IGCV3} in terms of the number of parameters.

\begin{table}[h!]
\centering
  \begin{tabular}{l|r|c}
    \toprule
    model        &  \# param &	Error  $(\%)$  \\
    \midrule
    DenseNet-40 \cite{Huang2017DenselyCC} & 1.02M &  6.11 \\
    ResNet-164 \cite{He2016DeepRL}  & 1.70M  & 5.41 \\
    IGCV2\cite{IGCV3}   &   2.20M      & 5.34  \\
    IGCV3\cite{IGCV3}   &   2.20M    & 5.04  \\
    MobileNetV2\cite{Sandler2018MobileNetV2IR}  & 2.10M   & 5.44  \\
    % \midrule
    % CondenseNet-94\cite{Huang2017CondenseNetAE}	&	0.33M	&	5.00	\\
    % CondenseNet-86\cite{Huang2017CondenseNetAE}	&0.52M	&	5.00	\\
    % CondenseNet$^{light}$-160\cite{Huang2017CondenseNetAE} &    3.1M     & 3.46  \\
    % CondenseNet-160\cite{Huang2017CondenseNetAE} &    4.2M   & 3.76 \\
    % NASNet-A\cite{Zoph2017LearningTA} &   3.3M   & 3.41 \\
% 	\midrule
%     ShuffleNet-38  		& 0.160M	&	50M	& 8.20	\\
%     ShuffleNet-74  		& 0.317M	&	96M	& 6.65	\\
%     ShuffleNet-38 (x2) 	& 0.572M	&	178M& 5.73	\\
    	\midrule
    WaveletNet (ours) 		& 1.10M	& 5.15	\\
% 	WaveletNet (x1.5)	& 2.32M	& 	\\
%     WaveletNet (x2) 	& 3.99M	& 4.5	\\
    \bottomrule
  \end{tabular}
\caption{Performance of WaveletNet and other state-of-the-art efficient models on CIFAR-10. \label{tbl:cifar10}}  
\end{table}

\subsection{Ablation study}We show that the Depthwise Fast Wavelet Transform is crucial in our architecture by comparing the performances with and without the DFWT layer. Similar to the importance of the Shuffle layer to the ShuffleNet, we see a significant improvement when we use DFWT as a conjugate layer to WConv.
\begin{table}[!ht]
  \centering
  \begin{tabular}{l|c|c|c}
    \hline
    model        &   original
	&  no DFWT
& gap 
 \\
    \hline
    WaveletNet  	& 5.15 & 8.74  & 3.59	\\
    \hline
  \end{tabular}
\caption{Performance of WaveletNet with DFWT on CIFAR-10 vs. the performance of the same architecture without DFWT. Even though there are no parameters in DFWT, the connectivity created by these layers is crucial. Reported numbers are errors on the test set measured in \%.\label{tabl:abl-study}}  
\end{table}

\subsection{ImageNet}
We evaluate our model on the ImageNet 2012 classification dataset. We follow most of the training
settings and hyper-parameters used in \cite{He2016DeepRL}. We also use a similar network architecture as \cite{Sandler2018MobileNetV2IR} with a slight modification. The architecture is listed in Table. \ref{tbl:architecture}.
\begin{table}[h!]
	\small
  \centering
  \begin{tabular}{l|c|c|c|c|c|c|c}
    \hline
    Input	&	Operator & $C$ & repeat & stride & $\kappa_1$ & $\kappa_2$ & $\kappa_3$\\
    \hline
    224$^2$x3 & Conv & 32 & 1 & 2 & - & - & - \\
    112$^2$x32 & block & 16 & 1 & 1 & 0 & 3 & 3 \\
    112$^2$x16 & block & 24 & 2 & 2 & 0 & 3 & 3 \\
    56$^2$x24 & block & 32 & 3 & 2 & 0 & 3 & 3 \\
    28$^2$x32 & block & 64 & 4 & 2 & 0 & 3 & 3 \\
    14$^2$x64 & block & 96 & 3 & 1 & 0 & 3 & 3 \\
    14$^2$x96 & block & 160 & 3 & 2 & 0 & 3 & 3 \\
    7$^2$x160 & block & 320 & 1 & 1 & 0 & 3 & 3 \\
    7$^2$x320 & conv2d & 1280 & 1 & 1 & 0 & 3 & 3 \\
    7$^2$x1280 & avgpool & - & 1 & - & - & - & - \\
    1$^2$x1280 & conv2d & 1000 & - & - & - & - & -  \\
    \hline
  \end{tabular}
  \caption{WaveletNet Architecture for ImageNet. $C$ is the number of outptut channels. $\kappa_1,\kappa_2,\kappa_3$ are the logarithmic depths for our WaveletNet blocks.}\label{tbl:architecture}
\end{table}

% \begin{figure*}[t!]
% 	\label{fig:imagenet}
% 	\centering
%   \includegraphics[width=0.95\linewidth]{latex/}
%   \caption{Performance of WaveletNet and other state-of-the-art models on ImageNet. Top-1 accuracy vs number of MAC in million.}
% \end{figure*}
% (i) we set the weight decay to 4e-5 instead of1e-4 and use linear-decay learning rate policy (decreased from 0.5 to 0); 
% (ii) we use slightly less aggressive scale augmentation
% for data preprocessing. 
% Similar modifications are also referenced in [12] because such small networks usually suffer from underfitting rather than overfitting. 
We used 4 Tesla V100 GPUs to train our model. The total batch size is set to 512. To benchmark, we compare single crop top-1 performance on the ImageNet validation set, i.e. cropping a 224x224 center view from 256x256 input image and evaluating classification accuracy. 
We strictly follow the same settings as in \cite{He2016DeepRL} for all our models. We used Momentum Optimizer with a starting learning rate 0.1 and decay rate 0.9. We used a linear learning rate decay in total 120 epochs. We tested different sizes of our model and compared to other state-of-the-art CNNs. The results are reported in Table \ref{tbl:imagenet}.

\begin{table}[h!]
  \small
  \centering
  \begin{tabular}{l|r|r|c}
    \toprule
    model      &  \# param	&	\# MAC  &    error $\%$ \\
    \midrule
    AlexNet	\cite{krizhevsky2012imagenet}		&	60M		&	1.5G	&	42.8	\\
	VGG 16 \cite{43022}			&	138M	&	15.5G	&	28.5	\\
    ResNet-50	\cite{He2016DeepRL}		& 	24M	
    &	3.9G	& 	24.0	\\
            \midrule
    NASNet-A~\cite{Zoph2017LearningTA} &	5.3M	
    &	564M	&	26.0	\\
%           \midrule
%   CondenseNet(G=C=8)\cite{Huang2017CondenseNetAE} 	& 2.9M	& 	274M &	29.0	\\
%   CondenseNet(G=C=4)\cite{Huang2017CondenseNetAE}	& 4.8M	&	529M &	26.2	\\    
        \midrule
    MobileNet V1~\cite{Howard2017MobileNetsEC}	&	4.2M	&	575M	&	29.4	\\
            \midrule
    MobileNet V2~\cite{Sandler2018MobileNetV2IR}	&	3.4M	&	300M	&	28.0	\\
    MobileNet V2 (x1.4)~\cite{Sandler2018MobileNetV2IR}	&	6.9M	&	585M	&	25.3	\\
        \midrule
	ShuffleNet (x0.5)~\cite{Zhang2017ShuffleNetAE}	&	-	&	38M	& 41.6	\\
	ShuffleNet~\cite{Zhang2017ShuffleNetAE}	&	-	&	140M	&	32.4	\\
	ShuffleNet (x1.5)~\cite{Zhang2017ShuffleNetAE}	&	-	&	292M	&	28.5	\\
    ShuffleNet  (x2)~\cite{Zhang2017ShuffleNetAE} 	&	- 	&	524M	&	26.3	\\
    \midrule
    % WaveletNet (x0.25)	&  0.43M	&	43M	&  	 \\
    % WaveletNett (x0.5)	&  1.03M	&	80M	&  	 \\
    % WaveletNett (x0.75)	&  1.79M	&	140M	&  	 \\
    WaveletNet (ours) 	&  2.72M	&	216M	&  	30.6 \\
    \bottomrule
  \end{tabular}
  \caption{Performance of WaveletNet and other state-of-the-art models on ImageNet.}\label{tbl:imagenet}  
\end{table}

WaveletNet shows comparable results to ShuffleNet and MobileNetV2 in terms of accuracy vs. MAC. It also requires less number of parameters.

\section{Conclusion} \label{sec:concl}
In this paper, we discussed our intuition on building efficient CNN architectures and showed the importance of keeping connectivity within residual blocks. We also addressed a way of learning principles for efficient design from AutoML models and encouraged to explore symmetry-breaking models. We also proposed using human designed fast algorithms in convolutional neural network models to increase efficiency.

Hence, we designed a logarithmic scale convolution layer called Wavelet Convolution, and proposed its conjugate layer called depthwise fast wavelet transformation and then derived an efficient CNN, WaveletNet.
Wavelet Convolutions shrink the complexity of general convolutional layers by $\bigo(\log D / D)$. We compared WaveletNet to a variety of efficient CNN architectures and demonstrated superior size and complexity, and comparable accuracy characteristics.

% WaveletNet has shown advantages over previous small-sized models, including ShuffleNet and MobileNet, in terms of model size, performances and speed. 

As the WaveletNet unit mostly operates on the channel dimensions, it can also be considered as a general building block, not necessarily restricted to 2D CNNs. We expect that most 1D or 3D CNNs can also take advantage of our model in applications such as Speech Recognition and Video Classification.

\section*{Acknowledgement}
This material is based in part upon work supported by the Defense 
Advanced
Research Projects Agency (DARPA) under Agreement No. HR00111890042.
This material is also based upon work supported in part by the National 
Science Foundation under
Grant No. CCF-1640012, as well as in part by the Semiconductor Research 
Corporation
under Grant No. 2016-EP-2693-B.
It is also supported in part by the Army Research Office and was 
accomplished under Cooperative Agreement Number
W911NF-18-2-0048, and the MIT-SenseTime Alliance on Artificial 
Intelligence.

{\small
\bibliographystyle{ieee}
\bibliography{citation}
}

\section*{Appendix}
\section*{Formalism on Connectivity}

\subsection{Notations and conventions.} We clarify the following: with the style $\mathsf{C}, \mathsf{C}', \mathsf{C}_1, \mathsf{W}, \mathsf{S},$ etc. we label convolutional layers; every standard 2D convolutional layer (Conv2D) $\mathsf{C}$ has a three dimensional tensor $\mathbf{x}$
of size $N \times N \times D$ as an input, where $D$ is the number of input channels and $N$ is the spatial dimension. The Conv2D layer itself has $D'$ kernels of size $K \times K \times D$. In general, we assume that $D=D'$ unless we specify otherwise. We write input and output tensors of layers as $\mathbf{x}, \mathbf{x}', \mathbf{y},$ etc.; furthermore, in our calculations of Big O complexities, we assume that $N$ and $K$ are constants. There are two types of complexities that we compute: $\bigof$ when counting the MAC and $\bigom$ when counting the number of parameters. When the complexities $\bigof$ and $\bigom$ are the same, we simply write $\bigo$. We label  matrices as $A,B,H,$ etc; let $A = \{a_{ij}\}_{0\leq i,j \leq D-1}$ be a $D \times D$ matrix. With $\ell(A)$ we count the number of nonzero elements of $A$. When all of the elements of $A$ are nonzero, i.e. $a_{ij} \neq 0$, then we say that the matrix $A$ is \textit{full}, which is equivalent to $\ell(A)=D^2$. Finally, we assume that the logirithmic depth $\kappa$ is maximum, i.e. WConv is fully utilized, meanting that it takes $D\log D$ parameters and $N^2D\log D$ MACs (Table~1 in main paper).

\subsection{The notion of connectivity on channel dimensions}

Our first model assumption is to \textit{manipulate channel dimensions} of the inputs, thereby aiming to \textit{reduce the complexity} of the model. To move forward with our assumptions, we introduce an intuitive notion of \textit{connectivity}. 
\begin{definition}[Connectivity] \label{def:conn}
For a convolutional layer $\mathsf{C}$ with $D$ input channels and $D$ output channels we associate an \textit{adjacency matrix} $A(\mathsf{C}) \equiv A \colonequals \{ a_{ij}\}_{0\leq i,j \leq D-1}$, meaning that $a_{ij} = 1$ if there is a sub-convolution $\mathsf{C}'$ in $\mathsf{C}$, where channel $\#i$ is in the input channels of $\mathsf{C}'$ and channel $\#j$ is in the output channels of $\mathsf{C}'$.\footnote{Here the indexation of the channels starts from 0 and goes from left to right.}
\end{definition}
For example, for group convolutions $A$ is a block diagonal matrix with $D/g$ matrices of side $g$ on the diagonal; for depthwise convolutions $A$ is a diagonal (identity) matrix. 

Definition \ref{def:conn} helps us read the complexity of the models through counting the number of nonzero elements in the adjacency matrix $A$. Group convolutions have $g(D/g)^2 = D^2/g$ nonzero elements in $A$, which represents $\mathcal{O}(D^2/g)$ complexity. Likewise, a depthwise convolution is of order $\mathcal{O}(D)$. Conventional convolutions have $\bigo(D^2)$ complexity, which corresponds to $g=1$. It feels natural to ``fill in the gap'' and introduce a construction of the order $\mathcal{O}(D\log D)$ (see equation \eqref{comparison} for a hierarchy). 

\begin{equation} \label{comparison}
\underset{\text{DWConv}}{\bigom(D)} \subset \boxed{\underset{\text{WConv (ours)}}{\bigom(D \log D)}} \subset \underset{\text{GConv}}{\bigom(D^2/g)}
\end{equation}

For WConv the adjacency matrix $A$ will be sparse by virtue of algorithm Algorithm~1 and Figure~3 (in main papaer). More specifically, after direct observations and calculations, the following holds.
\begin{claim}[Structure of the wavelet convolution's adjacency matrix $A$]\label{claim:A-wconv}
For $D=2^k$ and a WConv $\mathsf{C}$ the matrix $A(\mathsf{C}) = \{ a_{ij}\}_{0\leq i,j \leq D-1}$ satisfies $a_{ij} = \mathbb{I}_{(i,j) \in \mathcal{A}}$\footnote{Here $\mathbb{I}_\mathcal{V}$ is the indicator function, meaning that it is 1 if condition $\mathcal{V}$ is true, and it is 0 otherwise.}
and $$\mathcal{A} = (\mathcal{A}' \cup \mathcal{A}'') \cup (\mathcal{A}_1 \cup \cdots \cup \mathcal{A}_{k-1}),$$ where $\mathcal{A}' = \{ (0,j) \mid j \in [0, D/2-1]\}$\footnote{Here $[a,b]$ for $a \leq b$ integers, denotes the discrete interval of integers $a, a+1, \dots b$.}, $\mathcal{A}'' = \{ (j,D-1) \mid j \in [D/2,D-1]\}$ and $\mathcal{A}_p = \{ (i,j) \mid i \in [2^{p-1},2^{p}-1], j \in [D-D/2^p+1,D-D/2^{p+1}]\}$ for all $p=1,\dots,k-1$. In total, $A$ has $D+(D/4)(k-1) = D+(D/4)(\log D - 1)$ nonzero elements (see $A$ in equation \eqref{eq:haar-exmpl} for $D=8$).
\end{claim}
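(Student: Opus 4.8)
The plan is to read the block structure of $A$ directly off Algorithm~\ref{alg:WConv} together with the partitions \eqref{eq:partition_in} and \eqref{eq:partition_out}, and then simply add up areas. The key observation is that a WConv is nothing but a disjoint union of $k+1$ ordinary Conv2D sub-convolutions $\mathsf{C}'_0,\dots,\mathsf{C}'_k$, one for each $\mathbf{y}_p$ produced in Algorithm~\ref{alg:WConv}, and that a single Conv2D connects \emph{every} one of its input channels to \emph{every} one of its output channels (this is the ``full'' case $\ell=D^2$ noted just after Definition~\ref{def:conn}). Hence, by Definition~\ref{def:conn}, $a_{ij}=1$ precisely when input channel $i$ and output channel $j$ lie in the \emph{same} piece index $p$; equivalently, each $\mathsf{C}'_p$ contributes one all-ones rectangular block to $A$ whose row-band is the $p$-th input piece and whose column-band is the $p$-th output piece. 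So the entire argument reduces to (i) locating these bands and (ii) summing their areas.

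For step (i) I would track the cumulative sums defining the two partitions. Taking $\kappa=k=\log D$ maximal (so the two smallest pieces are single channels), the running indices $s,e$ in Algorithm~\ref{alg:WConv} telescope to show that $\mathbf{y}_0$ reads input channels $[0,D/2-1]$, that $\mathbf{y}_i$ reads input channels $[D-D/2^i,\,D-D/2^{i+1}-1]$ for $1\le i\le k-1$, and that $\mathbf{y}_k$ reads the single channel $\{D-1\}$. Reading the reversed output partition \eqref{eq:partition_out} the same way, the outputs of $\mathbf{y}_0$, $\mathbf{y}_i$, $\mathbf{y}_k$ occupy $\{0\}$, $[2^{i-1},2^i-1]$, and $[D/2,D-1]$ respectively. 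Pairing the input band with the output band of each sub-convolution then yields exactly one full block per $p$, and these are the families $\mathcal{A}'$ (from $\mathbf{y}_0$), $\mathcal{A}''$ (from $\mathbf{y}_k$) and $\mathcal{A}_p$ (from the intermediate $\mathbf{y}_p$) in the statement — up to the transpose convention that fixes which of the two channel indices labels the rows of $A$.

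For step (ii), disjointness is automatic: the input pieces partition the row index set $[0,D-1]$ and the output pieces partition the column index set $[0,D-1]$, so blocks with distinct $p$ sit in distinct row-bands and hence never overlap, and their union is precisely $\mathcal{A}$. Counting cells block by block, the blocks coming from $\mathbf{y}_0$ and $\mathbf{y}_k$ each have $(D/2)\cdot 1=D/2$ cells, while every intermediate block has $2^{p-1}\cdot(D/2^{p+1})=D/4$ cells, independent of $p$. Summing gives $D/2+D/2+(k-1)\cdot D/4 = D+(D/4)(k-1)=D+(D/4)(\log D-1)$, as claimed.

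The routine but error-prone part — and the only real obstacle — is the index bookkeeping in step (i): one must verify that the partial sums of the geometric partition land exactly on the stated endpoints, that the reversed output partition is aligned with the correct input piece, and that the half-open slicing $\mathbf{x}[:,:,s:e]$ in Algorithm~\ref{alg:WConv} introduces no off-by-one shift in the band endpoints. Everything downstream (fullness of each block, disjointness of the tiling, and the final geometric sum) is immediate once the bands are pinned down.
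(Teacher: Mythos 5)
Your proof is correct and follows essentially the same route as the paper's own (much terser) proof: decompose WConv into its $k+1$ full sub-convolutions, read off one all-ones rectangular block of $A$ per piece, and sum the block areas $D/2 + D/2 + (k-1)\cdot D/4$ --- what you work out explicitly in step (i) the paper dismisses as ``simple indexation.'' One point worth flagging: the column bands you derive for the intermediate blocks, $[D-D/2^{p},\, D-D/2^{p+1}-1]$, are shifted by one from the claim's stated intervals $[D-D/2^{p}+1,\, D-D/2^{p+1}]$ for $\mathcal{A}_p$; since the paper's own example matrix for $D=8$ in equation \eqref{eq:haar-exmpl} agrees with your bands (row $1$ has its ones in columns $4,5$, not $5,6$), this is an off-by-one typo in the statement itself rather than an error in your bookkeeping, and the total count $D+(D/4)(\log D - 1)$ is unaffected either way.
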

\begin{proof}
The set $\mathcal{A}'$ measures the first convolution of $\mathsf{C}$ that connects the first half of the input channels to the first output channel. Likewise, the set $\mathcal{A}''$ measures the last convolution of $\mathsf{C}$ that connects the last input channel to the second half of output channels. The description of the intermediary sets $\mathcal{A}_p$ comes from simple indexation. Finally, the first and last convolutions have $D/2$ connections. The intermediate convolutions have $D/4$ connections and there are $\log D - 1$ possibilities for $p$. Thus, in total, there are $2(D/2)+(D/4)(\log D -1)$ nonzero elements in $A$, as desired.
\end{proof}

\subsection{Optimal connectivity for WConv}
Our second model assumption is that the adjacency matrix $A$ \textit{needs to be full to facilitate better performance}. For example, the adjacency matrices for group, depthwise and wavelet convolutions are sparse, hence some of the input channels are not connected with the output channels. 

It is reasonable to conjecture that this lack of connectivity might limit the representations of the model, and in fact, we are not the first to notice such an issue. ShuffleNet utilizes a Shuffle layer $\mathsf{S}$ (with no additional parameters) between two group convolutions, $\mathsf{C}_1$ and $\mathsf{C}_2$, which serves to connect isolated channels. 

As the adjacency matrix $A$ can be viewed as a transition matrix, in the language of our formalism, the contribution of the Shuffle layer can be restated as the product $A(\mathsf{C}_2)A(\mathsf{S})A(\mathsf{C}_1)$\footnote{Note that  the adjacency matrix acts on column vectors, thereby acting on the left.} being a full matrix, thereby securing communication between every pair of channels. Since $\mathsf{S}$ is $\bigof(D)$ and $\bigom(1)$ and $\mathsf{C}_i$ is $\bigo(D^2/g)$ for $i=1,2$, achieving a full connectivity has complexity $\bigo(D^2/g)$ and therefore is unlikely to be optimal. 

Here we show that we can achieve full connectivity of complexity $\bigo(D\log D)$ from the sparse structure of WConv. The question is \textit{what kind of an alternative of the Shuffle layer works best for WConv}, i.e. \textit{what is the optimal connectivity} for WConv. Namely, let us formalize the problem, by defining the following two important sets: 
\begin{multline}
\mathcal{H}^{\mathrm{MEM}}_\mathsf{C} \colonequals \{ \text{convolutions } \mathsf{C}' \mid A(\mathsf{C})A(\mathsf{C}')
\text{ is full and } \\ \bigom \text{-complexity of } \mathsf{C}'\text{ is minimal}\}
\end{multline}
\begin{multline}
\mathcal{H}^{\mathrm{MAC}}_\mathsf{C} \colonequals \{ \text{convolutions } \mathsf{C}' \mid A(\mathsf{C})A(\mathsf{C}')
\text{ is full and} \\ \bigof\text{-complexity of }\mathsf{C}'\text{ is minimal}\}
\end{multline}

If we know the convolutional layers in $\mathcal{H}^{\mathrm{MEM}}_\mathsf{C}$ and $\mathcal{H}^{\mathrm{MAC}}_\mathsf{C}$ then we know the answer to our problem. 

The efficient solution to this problem is rooted in wavelet theory. Take $D = 2k$ for simplicity, where $k \geq 1$. The discrete wavelet (Haar) transform is a change of basis, transforming a $D$-dimensional vector from the standard basis to the Haar basis. Therefore, the transform can be described by an $D \times D$ matrix. Here, with the
matrix $H_D$ we define our modified wavelet, tweaked from the Haar wavelet, as follows:
\begin{equation}
H2 :=
\begin{bmatrix}
1 & −1 \\
1 & 1 
\end{bmatrix}
\end{equation}
and for $k > 1$ the recursive definition 
\begin{equation}
\label{eq:haar-intro}
H_D \equiv H_2^k :=
\begin{bmatrix}
I_2^{k-1} \otimes [1, −1] \\
H_2^{k−1} \otimes [1, 1] 
\end{bmatrix},
\end{equation}
where $I_2^{k-1}$ is the $2^{k-1} \times 2^{k-1}$ identity matrix and $\otimes$ is the Kronecker product. Here, we only relabeled the rows and columns of the original Haar matrix. The matrix $H_D$ has orthogonal rows and columns. To perform the DFWT layer we broadcast the matrix $H_D$ on channel dimensions.

Now, we can study the connectivity of channel dimensions once we have DFWT and WConv connected in sequence. Namely, let $\mathsf{W}$ be the DFWT and $\mathsf{C}$ be the WConv layers (see \eqref{eq:haar-exmpl} for $H_D$ and $A=A(\mathsf{C})$). It is also straightforward to see that $A(\mathsf{W})=|H_D|$, i.e. we take the absolute value element-wise, and thus convert $-1$ entries to $1$. See the following example for $D=8$. 

\begin{eqnarray} 
\centering
\label{eq:haar-exmpl}
\small
A&:&
\begin{bmatrix}
    \color{red} \mathbf{1} & \color{red} \mathbf{1} & \color{red} \mathbf{1} & \color{red} \mathbf{1} & \color{red} 0 & \color{red} 0 & \color{red} 0 & \color{red} 0\\
  \color{green} 0 & \color{green} 0 & \color{green} 0 & \color{green} 0 & \color{green} \mathbf{1} & \color{green} \mathbf{1} &  \color{green} 0 & \color{green} 0 \\
    0 & 0 & 0 & 0 & 0 & 0  & \mathbf{1} & 0 \\
    0 & 0 & 0 & 0 & 0 & 0 & \mathbf{1} & 0 \\
    \color{blue} 0 & \color{blue} 0 & \color{blue} 0 & \color{blue} 0 & \color{blue} 0 & \color{blue} 0 & \color{blue} 0 & \color{blue} \mathbf{1} \\
    \color{blue} 0 & \color{blue} 0 & \color{blue} 0 & \color{blue} 0 & \color{blue} 0 & \color{blue} 0 & \color{blue} 0 & \color{blue} \mathbf{1} \\
    \color{blue} 0 & \color{blue} 0 & \color{blue} 0 & \color{blue} 0 & \color{blue} 0 & \color{blue} 0 & \color{blue} 0 & \color{blue} \mathbf{1} \\
    \color{blue} 0 & \color{blue} 0 & \color{blue} 0 & \color{blue} 0  & \color{blue} 0 & \color{blue} 0 & \color{blue} 0 & \color{blue} \mathbf{1}
\end{bmatrix} \\
H_8&:&
\small
\begin{bmatrix}
  \color{red} \mathbf{1} & \color{red} 0 & \color{red} 0 & \color{red} 0 & \color{red} \mathbf{-1} & \color{red} 0 & \color{red} 0 & \color{red} 0\\
  \color{red} 0 & \color{red} \mathbf{1} & \color{red} 0 & \color{red} 0 & \color{red} 0 & \color{red} \mathbf{-1} & \color{red} 0 & \color{red} 0 \\
    \color{red} 0 & \color{red} 0 & \color{red} \mathbf{1} & \color{red} 0 & \color{red} 0 & \color{red} 0  & \color{red} \mathbf{-1} & \color{red} 0 \\
    \color{red} 0 & \color{red} 0 & \color{red} 0 & \color{red} \mathbf{1} & \color{red} 0 & \color{red} 0 & \color{red} 0 & \color{red} \mathbf{-1} \\
    \color{green} \mathbf{1} & \color{green} 0 & \color{green} \mathbf{-1} & \color{green} 0 & \color{green} \mathbf{1} & \color{green} 0 & \color{green} \mathbf{-1} & \color{green} 0 \\
    \color{green} 0 & \color{green} \mathbf{1} & \color{green} 0 & \color{green} \mathbf{-1} & \color{green} 0 & \color{green} \mathbf{1} & \color{green} 0 & \color{green} \mathbf{-1} \\
    \mathbf{1} & \mathbf{-1} & \mathbf{1} & \mathbf{-1} & \mathbf{1} & \mathbf{-1} & \mathbf{1} & \mathbf{-1} \\
    \color{blue} \mathbf{1} & \color{blue} \mathbf{1} & \color{blue} \mathbf{1} & \color{blue} \mathbf{1}  & \color{blue} \mathbf{1} & \color{blue} \mathbf{1} & \color{blue} \mathbf{1} & \color{blue} \mathbf{1}
\end{bmatrix} 
\normalsize
\end{eqnarray}

Moreover, a simple matrix multiplication can show us that $A(\mathsf{C})A(\mathsf{W})$ \textit{is} a full matrix. What remains to be seen is that the DFWT $\mathsf{W}$ is optimal under the assumptions of our connectivity formalism.

\begin{theorem}[DFWT is Optimal for WConv] \label{thm:opt}
Let $D=2^k$. For a WConv $\mathsf{C}$ with $D$ input channels, the DFWT $\mathsf{W}$ with $D$ input channels satisfies 
$$
\mathsf{W} \in \mathcal{H}^{\mathrm{MEM}}_\mathsf{C} \cap \mathcal{H}^{\mathrm{MAC}}_\mathsf{C}.
$$
\end{theorem}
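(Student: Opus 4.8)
The plan is to reduce the optimality claim to a purely combinatorial covering problem about the nonzero pattern of $A(\mathsf{C}')$. First I would note that for a conjugate $1\times 1$ convolution $\mathsf{C}'$, both the $\bigof$ and $\bigom$ complexities are governed by the number of channel-connections $\ell(A(\mathsf{C}'))$ (each connection contributes one weight and $N^2$ MACs). Hence it is enough to show that $\mathsf{W}$ both makes $A(\mathsf{C})A(\mathsf{W})$ full and minimizes $\ell(A(\mathsf{C}'))$ over all $\mathsf{C}'$ for which $A(\mathsf{C})A(\mathsf{C}')$ is full; minimizing this single quantity simultaneously places $\mathsf{W}$ in $\mathcal{H}^{\mathrm{MEM}}_\mathsf{C}$ and $\mathcal{H}^{\mathrm{MAC}}_\mathsf{C}$.

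For fullness I would read the entry $(A(\mathsf{C})A(\mathsf{W}))_{ij}$ as nonzero exactly when some intermediate channel $m$ satisfies $A(\mathsf{C})_{im}=A(\mathsf{W})_{mj}=1$. By Claim~\ref{claim:A-wconv}, each output $i$ of $\mathsf{C}$ is fed by exactly one input group $G$ of the partition \eqref{eq:partition_in}, so the support of row $i$ of the product equals the union of the supports of the rows $\{m \in G\}$ of $A(\mathsf{W})=|H_D|$. The relabeled Haar matrix \eqref{eq:haar-intro} is arranged so that, restricted to each group $G$ of size $D/2^p$, the corresponding rows of $|H_D|$ partition the $D$ columns (each such row covering $2^p$ disjoint columns), exactly as displayed for $D=8$ in \eqref{eq:haar-exmpl}. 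Thus every group covers all columns and the product is full, which is the ``simple matrix multiplication'' remarked above.

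The lower bound is the crux. The partition \eqref{eq:partition_in} splits the $D$ input channels into $k+1$ disjoint groups, and each output group of $\mathsf{C}$ depends solely on its matched input group $G$. Therefore, for $A(\mathsf{C})A(\mathsf{C}')$ to be full, the $|G|$ rows of $A(\mathsf{C}')$ indexed by $G$ must jointly cover all $D$ columns; any family of rows covering $D$ columns has at least $D$ nonzeros (one per column). Summing over the $k+1$ disjoint groups yields $\ell(A(\mathsf{C}')) \geq (k+1)D = D\log D + D$. Computing $\ell(A(\mathsf{W}))=\ell(|H_D|)$ from the recursion \eqref{eq:haar-intro} gives exactly $(k+1)D$, so $\mathsf{W}$ attains the bound and is optimal for both measures.

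The step I expect to be hardest is establishing the exact alignment between the relabeled Haar scales and the WConv groups for general $D=2^k$: namely that the rows of $|H_D|$ indexed by each group $G$ of \eqref{eq:partition_in} form a clean disjoint cover of the columns with no wasted overlap. This single structural fact drives both the fullness (each group covers everything) and the tightness of the lower bound (each group spends exactly $D$ nonzeros). Rather than the direct inspection available at $D=8$, making it rigorous would require an induction on $k$ that tracks how the Kronecker recursion \eqref{eq:haar-intro} distributes nonzeros across scale-blocks and maps them onto the contiguous index blocks of \eqref{eq:partition_in}.
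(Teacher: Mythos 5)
Your treatment of fullness and of the memory part tracks the paper's own proof closely. The paper likewise groups the rows of $A(\mathsf{C})$ by the input groups from Claim~\ref{claim:A-wconv}, derives the ``at least one nonzero per column per group region'' covering condition on any candidate conjugate, and observes that $|H_D|$ meets it with exactly one nonzero per column per region; your disjoint-cover statement and the count $\ell(|H_D|)=(k+1)D$ are the same argument, made slightly more explicit. Both you and the paper leave the general-$k$ alignment between the Kronecker recursion \eqref{eq:haar-intro} and the partition \eqref{eq:partition_in} at the level of inspection (the $D=8$ display in \eqref{eq:haar-exmpl}), so you are not behind the paper on rigor there.

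The genuine gap is in your MAC step. You reduce $\bigof$-optimality to $\ell$-minimization on the premise that ``each connection contributes $N^2$ MACs,'' i.e.\ that a conjugate layer is executed as a naive sparse matrix multiplication. That premise fails for exactly the layer being analyzed: the DFWT is not computed by multiplying by $|H_D|$ entry by entry; Algorithm~\ref{alg:DFWT} evaluates it by the fast wavelet recursion in $\bigof(D)$ operations per spatial position, even though $\ell(|H_D|)=\Theta(D\log D)$. (The paper draws this same distinction for the Shuffle layer, which is $\bigof(D)$ but $\bigom(1)$ --- operation count and nonzero/parameter count are different complexity measures in this formalism.) Under your model the minimal achievable MAC would be $\Theta(N^2 D\log D)$, which is off by a $\log D$ factor; the DFWT's own fast implementation refutes that bound, so your argument is internally inconsistent and does not place $\mathsf{W}$ in $\mathcal{H}^{\mathrm{MAC}}_\mathsf{C}$ as the paper defines it. The paper's MAC argument is separate and genuinely needed: any conjugate making $A(\mathsf{C})A(\mathsf{C}')$ full must read all $D$ input channels, so its MAC count has a linear lower bound, and since the fast implementation of $\mathsf{W}$ attains $\bigof(D)$, that trivial bound is sharp. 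In short, memory optimality is a statement about the sparsity pattern, while MAC optimality is a statement about the fast algorithm; they require different arguments, and conflating them loses precisely the feature that motivates the ``Fast'' in DFWT.
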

\begin{proof} Let $\tilde{A} = A(\mathsf{C})$. It suffices to show that $\mathsf{W}$ is in both $\mathcal{H}^{\mathrm{MEM}}_\mathsf{C}$ and $\mathcal{H}^{\mathrm{MAC}}_\mathsf{C}$. Firstly, since $A(\mathsf{W})=|H_D|$, by the recursive definition of $H_D$ in equation \eqref{eq:haar-intro} we can compute that $\tilde{A}(A(\mathsf{W}))=\tilde{A}|H_D|$ is full (see the $D=8$ case in equation \eqref{eq:haar-exmpl}). Secondly, suppose that $\tilde{A}D$ is full for some $D \times D$ matrix $D$. Let $a_0,\dots a_{D-1}$ be the row vectors of $\tilde{A}$ and $d_0,\dots d_{D-1}$ be the column vectors of $D$. Hence, all of the possible dot products $a_id_j$ are nonzero. Claim \ref{claim:A-wconv} yields a classification of the nonzero entries in $\tilde{A}$, given by the sets $\mathcal{A}', \mathcal{A}'', \mathcal{A}_p$. We can color the rows of $A$ depending on which set contains the rows' nonzero indices (see $A$ in equation \eqref{eq:haar-exmpl}). Since a row of a specified color is sparse, the nonzero entries of that row determine a rectangular in $D$ (labeled with the same corresponding color) where there should be at least a single $1$ in the columns of the region (see $H_8$ in equation \eqref{eq:haar-exmpl}), otherwise some dot product $a_id_j$ will be zero. Enforcing minimality of $\bigom$ for $D$, it means that $\ell(D)$ has to be minimal, i.e. there is \textit{exactly} one nonzero element in each column of each separate color region in $D$. By virtue of its definition, $H_D$ satisfies the condition of minimality. Therefore, $|H_D|=A(\mathsf{W})$ is $\bigom$-minimal among all $D$, i.e. $\mathsf{W} \in \mathcal{H}^{\mathrm{MAC}}_\mathsf{C},$ as desired. Finally, since any layer $\mathsf{C}'$ receives input from $D$ channels, which in principle can be arbitrary, then the number of MACs of that layer has a linear lower bound. As the DFWT is $\bigof(D)$, this lower bound is sharp, hence $\mathsf{W} \in \mathcal{H}^{\mathrm{MAC}}_\mathsf{C},$ as desired.
\end{proof}

\section*{Additional Experimental Details}

We tested WaveletNet with different channel sizes and different $\kappa$ values on ImageNet. 

\begin{table}[h!]
  \small
  \centering
  \begin{tabular}{l|r|r|c}
    \toprule
    model      &  \# param	&	\# MAC  &    error $\%$ \\
    \midrule
    WaveletNett (x0.75)	&  1.79M	&	140M	&  	32.7  \\
    WaveletNet (x1) 	&  2.72M	&	216M	&  	30.6 \\
    WaveletNet (x1.25) 	&  3.82M	&	327M	&  	28.5 \\
    \bottomrule
  \end{tabular}
  \caption{Performance of WaveletNet with different channel on ImageNet.}\label{tbl:imagenet1}  
\end{table}

\begin{table}[h!]
  \small
  \centering
  \begin{tabular}{l|r|r|c}
    \toprule
    model      &  \# param	&	\# MAC  &    error $\%$ \\
    \midrule
    WaveletNet ($\kappa=5$) 	&  2.60M	&	204M	&  	31.5 \\
    WaveletNet ($\kappa=3$) 	&  2.72M	&	216M	&  	30.6 \\
    \bottomrule
  \end{tabular}
  \caption{Performance of WaveletNet with different $\kappa$ on ImageNet.}\label{tbl:imagenet2}  
\end{table}

Here $\kappa$ refers to the depth of the DFWT and the second WConv layer in Figure~5 (main paper) and we still set the first WConv as fully connected i.e. $\kappa_1=0; \kappa_2=\kappa_3=\kappa.$

Tables~\ref{tbl:imagenet1} and~\ref{tbl:imagenet2} show that WaveletNet is comparable to state-of-the-art efficient CNN models.

\end{document}